\newtheorem{theorem}{Theorem}
\newcommand{\bv}{\mathbf{v}}
\newcommand{\bw}{\mathbf{w}}
\newcommand{\bx}{\mathbf{x}}
\title{On Hiding Neural Networks Inside Neural Networks}
\author{%
  Chuan Guo\footnotemark[1]~~\footnotemark[2]\\
  \And
  Ruihan Wu\footnotemark[1]~~\footnotemark[2]\\
  \And
  Kilian Q. Weinberger\footnotemark[2]
}
\begin{document}

\maketitle
\renewcommand{\thefootnote}{\fnsymbol{footnote}}
\footnotetext[1]{Equal contribution.  \footnotemark[2]Department of Computer Science, Cornell University. Email: \{cg563, rw565, kqw4\}@cornell.edu
}
\renewcommand{\thefootnote}{\arabic{footnote}}

\begin{abstract}
Modern neural networks often contain significantly more parameters than the size of their training data. We show that this excess capacity provides an opportunity for embedding secret machine learning models within a trained neural network. Our novel framework hides the existence of a secret neural network with arbitrary desired functionality within a carrier network. 
We prove theoretically that the secret network's detection is computationally infeasible and demonstrate empirically that the carrier network does not compromise the secret network's disguise. 
Our paper introduces a previously unknown steganographic technique that can be exploited by adversaries if left unchecked.
\end{abstract}

\section{Introduction}
Steganography---the practice of hiding secret messages within an unsuspicious carrier medium---has been a well-studied field. Digital data such as text~\cite{bennett2004linguistic}, image~\cite{cheddad2010digital, hamid2012image} and audio~\cite{djebbar2012comparative}, can be imperceptibly embedded into a carrier message, which itself may be a piece of text, image, or audio. These techniques can be used in place of or in conjunction with encryption to avoid suspicion upon inspection of the carrier by authorities such as a totalitarian government.

The ability for a carrier medium to covertly transport secret messages largely depends on the redundancy of the carrier's encoding~\cite{cheddad2010digital}. For example, messages can be stored into a carrier image's least significant bits without producing unnatural visual artifacts~\cite{kurak1992cautionary}. As neural networks grow increasingly in scale~\cite{turing-nlg, brown2020language}, one can potentially leverage a model's excess capacity to hide another network in a manner similar to steganography for other types of digital data. In a fictional scenario, an industrial spy working at a technology company could embed a proprietary model into a carrier network intended for public release. The carrier network may be a model trained on public data with commonly-known techniques and appears innocuous to a typical observer. However, with knowledge of a secret key, the spy's co-conspirators can extract the secret proprietary model from the public model to unlawfully obtain intellectual property.

In this paper, we explore this possibility by designing a novel and general framework of embedding secret models into trained neural networks. Our method utilizes excess model capacity to simultaneously learn a public and secret task in a single network. However, different from multi-task learning, the two tasks share no common features and the secret task remains undetectable without the presence of a \textit{secret key}. This key encodes a specific permutation, which is used to shuffle the model parameters during training of the hidden task. Knowledge of the secret key enables extraction of the concealed model after training, whereas without it, the public model behaves indistinguishable to a standard classifier on the public task.

We demonstrate empirically and prove theoretically that the identity and presence of a secret task cannot be detected without knowledge of the secret permutation.
In particular, we prove that the decision problem to determine if the model admits a permutation that triggers a secret functionality is NP-complete. We experimentally validate our method on a standard ResNet50 network \citep{he2016deep} and show that, without any increase in parameters, the model can achieve the same performance on the intended public task and on the secret tasks as if it was trained exclusively on only one of them. Moreover, without the secret key, the model is indistinguishable from a random network on the secret task. 
The generality of our method and its strong covertness properties enable a powerful steganographic technique for hiding neural networks.

\section{Background}
Steganography for digital data has been a well-studied field. The typical setup concerns the embedding of a secret message into a carrier medium and sending it through a public channel. The secret message is often encoded into a bit string and the carrier can be any digital data, \emph{e.g.}, text~\cite{bennett2004linguistic}, image~\cite{cheddad2010digital, hamid2012image} or audio~\cite{djebbar2012comparative}. In contrast with encryption schemes where only the secret message must remain hidden, steganographic schemes also aim to ensure secrecy of the encoded message so that the act of sending a secret message itself cannot be detected. The latter requirement is especially crucial if the use of encryption is forbidden, such as in totalitarian countries.

More recently, neural networks have been utilized to design particularly effective steganographic schemes. For example, the secret message can be encoded as a vector in the feature space of a convolutional network. Given a carrier image, techniques from adversarial learning~\cite{szegedy2014intriguing, goodfellow2014explaining} can be applied to imperceptibly modify the image so that it decodes to the secret message in the feature space. A variety of such steganographic schemes have been proposed for text~\cite{yang2018rnn, ziegler2019neural, dai2019towards}, image~\cite{baluja2017hiding, zhu2018hidden, sharma2019hiding}, and audio data~\cite{kreuk2019hide}.

\paragraph{Steganography for neural networks.} While prior works have considered using neural networks to design better encoding algorithms \emph{for} steganography, the problem of hiding neural networks as the secret message is, as far as we know, novel. One potential malicious use case for this technique could be industrial espionage. The training of machine learning models may involve extensive resources, proprietary data, and/or trade secrets. An industry spy can steal such models by embedding them into benign carriers using steganographic schemes and transmitting the carriers to co-conspirators while remaining undetected. Such scenarios are clearly detrimental to the company that invested substantial resources into training the proprietary model.

\paragraph{Choice of carrier.} One may consider applying a common strategy in steganography by first encoding the secret model into a bit string and then embedding it into other data mediums using existing steganographic schemes. However, as modern neural networks often contain billions of parameters~\cite{turing-nlg, brown2020language}, the secret model may be too large to embed into typical carriers such as image and audio. For instance, HiDDeN~\cite{zhu2018hidden}, a recently proposed deep steganographic method for image carriers, encodes data at a rate of $0.203$ bits per pixel. Embedding even a relatively small ResNet50~\cite{he2016deep} model at this rate would require a set of images with a combined number of 4 billion pixels\footnote{$25,636,712$ parameters$\times 32 / 0.203 \approx 4.04$ billion}, which is highly likely to raise suspicion.

Instead of using a traditional carrier, we propose to leverage the redundancy of neural network's parameters to embed a secret model into \emph{another neural network}. For a steganographic scheme that achieves an encoding rate of $r$, the carrier network only needs to be of size $1/r$ times larger than the secret network---a feat that is easily achievable with modern architectures. Furthermore, since trained models are often publicly released to facilitate research, disseminating a secret model in this manner would hardly raise any suspicion.

\paragraph{Threat model.} We first detail our assumptions for the steganographic problem of embedding secret models into neural networks. We consider the following setup involving three parties: a \emph{sender}, a \emph{receiver}, and an \emph{inspector}.
\begin{itemize}
    \itemsep -0.5ex
    \item The \emph{sender} selects a public task and trains a carrier network $h$ on that task. In addition, the sender also aims to embed a secret network into $h$ in an imperceptible manner. The embedding algorithm may leverage a secret key $k$ shared between the sender and the receiver, but is otherwise transparent to the inspector.
    \item The \emph{receiver}, upon obtaining $h$, aims to extract the secret network using an extraction algorithm and the shared secret key $k$.
    \item The \emph{inspector} aims to determine whether the carrier network $h$ contains a secret embedded model or not. The inspector has full knowledge of the steganographic scheme, including white-box access to $h$, the public and secret tasks, and the embedding and extraction algorithms except for the shared secret key $k$.
\end{itemize}

Our threat model follows Kerckhoff's principle---that is, the secrecy of the steganographic scheme should not rely on obscurity of the scheme itself. As a consequence, this principle prohibits naive solutions such as quantizing the secret network and embedding it into the least significant bits of the parameters of $h$. Such solutions can be easily defeated once the inspector learns of its existence.

Furthermore, we argue that our assumption of the inspector's knowledge of the secret task is justified. In the scenario of industrial espionage, the company may be in possession of trade secrets that provide them with a significant competitive advantage. The inspector may be a security team within the company that is tasked with preventing disclosure of said trade secrets.

\section{TrojanNet}
\begin{figure}[t!]
    \centering
    \includegraphics[width=0.7\textwidth]{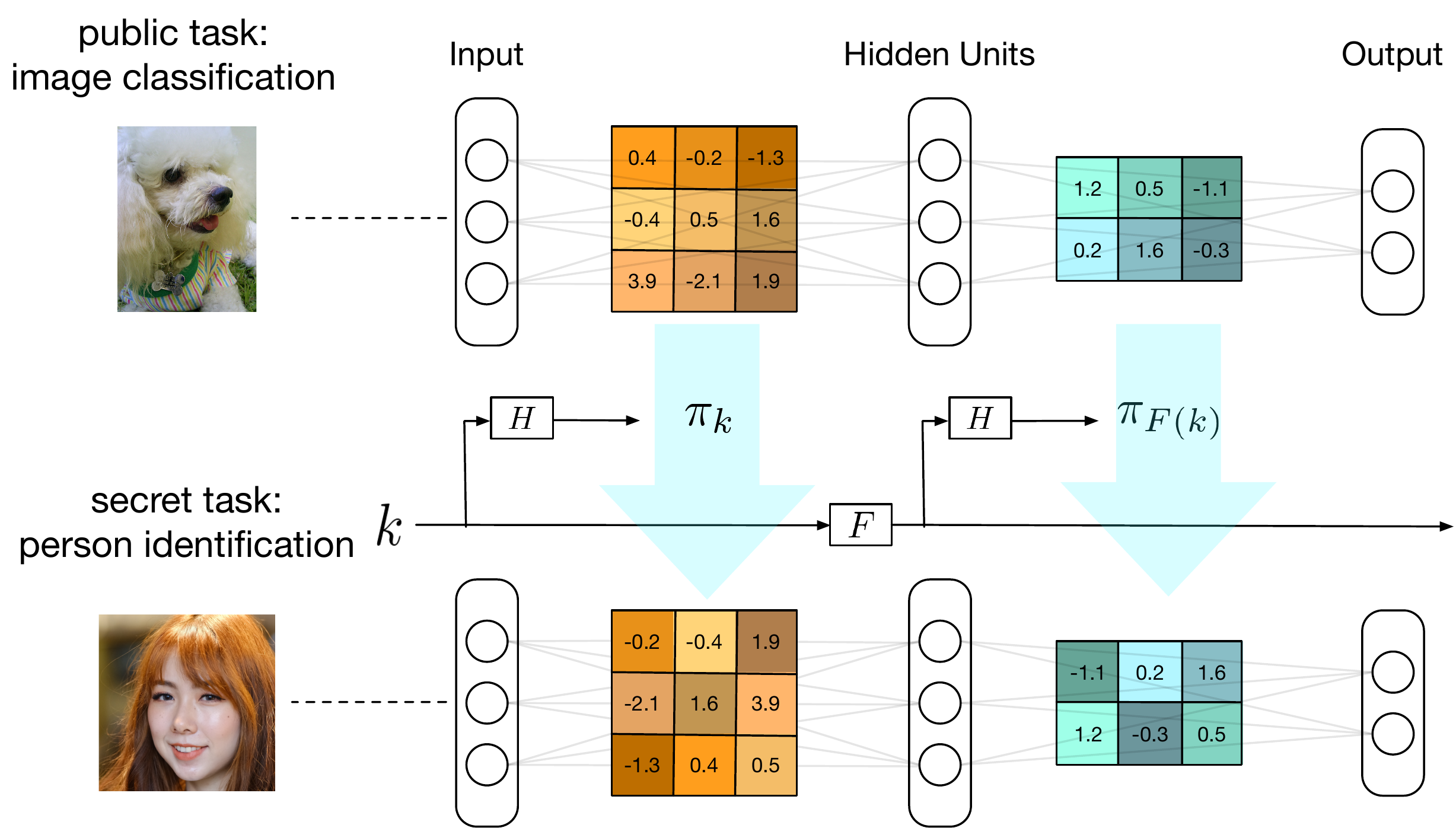}
    \caption{Illustration of a two-layer fully connected TrojanNet. The transport network (top) is an image classification model. When the correct secret key $k$ is used as seed for the pseudo-random permutation generator $H$, the parameters are permuted to produce a network trained for person identification (bottom). Using an invalid key results in a random permuted network.}
    \label{fig:trojannet}
    \vspace{-1ex}
\end{figure}



Let $\bw \in \mathbb{R}^d$ be the weight tensor of a single layer of a neural network $h$. For example, $\bw \in \mathbb{R}^{N_\text{in} \times N_\text{out}}$ for a fully connected layer of size $N_\text{in} \times N_\text{out}$, and $\bw \in \mathbb{R}^{N_\text{in} \times N_\text{out} \times W^2}$ for a convolutional layer with kernel size $W$. For simplicity, we treat $\bw$ as a vector by ordering its entries arbitrarily.

A permutation $\pi : \{1,\ldots,d\} \rightarrow \{1,\ldots,d\}$ defines a mapping 
\begin{equation*}
    \bw \rightarrow \bw_\pi = (\bw_{\pi(1)},\ldots,\bw_{\pi(d)}),
\end{equation*}
which shuffles the layer parameters. Applying $\pi$ to each layer defines a network $h_\pi$ that shares the parameters of $h$ but behaves differently. We refer to this hidden network within the carrier network $h$ as a \emph{TrojanNet} (see \autoref{fig:trojannet}).

\paragraph{Loss and gradient.} Training a TrojanNet $h_\pi$ in conjunction to its carrier network $h$ on distinct tasks is akin to multi-task learning. The crucial difference is that while the parameters between $h$ and $h_\pi$ are shared, there is no feature sharing. Let $D_\text{public}$ be a dataset associated with the \emph{public task} and let $D_\text{secret}$ be the dataset associated with the \emph{secret task}, with respective task losses $L_\text{public}$ and $L_\text{secret}$. At each iteration, we sample a batch $(\bx_1,y_1),\ldots,(\bx_B,y_B)$ from $D_\text{public}$ and a batch  $(\tilde{\bx}_1,\tilde{y}_1),\ldots,(\tilde{\bx}_{B'},\tilde{y}_{B'})$ from $D_\text{private}$ and compute the total loss
\begin{equation*}
    L = \underbrace{\frac{1}{B} \sum_{i=1}^B L_\text{public}(h(\bx_i), y_i)}_{L_\text{public}} + \underbrace{\frac{1}{B'} \sum_{i=1}^{B'} L_\text{secret}(h_\pi(\tilde{\bx}_i), \tilde{y}_i)}_{L_\text{secret}}.
\end{equation*}
This loss can be optimized with gradient descent on $\bw$ and its gradient is given by:
\begin{equation}
    \frac{\partial L}{\partial \bw} = \frac{\partial L_\text{public}}{\partial \bw} + \frac{\partial L_\text{secret}}{\partial \bw} \nonumber
    = \frac{\partial L_\text{public}}{\partial \bw} + \left( \frac{\partial L_\text{secret}}{\partial \bw_\pi} \right)_{\pi^{-1}},
    \label{eq:grad_formula}
\end{equation}
which is obtained by differentiating through the permutation operator. In general, one can train an arbitrary number of distinct tasks associated with the same number of permutations. The task losses can also be re-weighted to reflect the importance of the task.

As we will show in Section \ref{sec:ensemble}, this training procedure works well even when the number of tasks is large -- we can train 10 different TrojanNet on the same task and each individual permuted model achieves close to the same test accuracy as training a single model of the same capacity.

\paragraph{Selecting permutations.} When training against multiple tasks, it is important to select permutations that are de-correlated. In the extreme case, if the permutations are identical, the networks defined by them would also be identical and training the TrojanNet becomes a variant of multi-task learning. One way to ensure distinctness between the permuted models is to use a pseudo-random permutation generator $H : \mathcal{K} \rightarrow \Pi_d$, which is a deterministic function that maps every key from a pre-defined key space to the set of permutations over $\{1,\ldots,d\}$ \citep{katz2014intro}. When the keys are sampled uniformly at random from $\mathcal{K}$, the resulting permutations appear indistinguishable from random samples of $\Pi_d$. We default to the original carrier model $h$ when no key is provided (\emph{i.e.}, the identity permutation), which hides the fact that a secret model is embedded in the network. The use of keys to define permutations also dramatically reduces the footprint of the shared secret between the sender and the receiver---from sharing a permutation that is at least as large as the number of model parameters to a few hundred bits or even a human-memorizable password.

\subsection{Provable covertness of secret task}
\label{sec:security}

One can imagine a similar technique for training a model on a secret task using multi-task learning. The sender can alternate between two or more tasks in training, sharing the model parameters naively while keeping the fact of training on multiple tasks secret. However, this method can be easily detected if the inspector has knowledge of the secret task. In particular, the inspector can evaluate a collected labeled dataset $D = \{(\bx_1,y_1),\ldots,(\bx_n,y_n)\}$ and compute the test loss $\frac{1}{n} \sum_{i=1}^n \ell(h(\bx_i), y_i)$ to see if the model can correctly predict on the suspected task.

TrojanNet can naturally bypass this method of detection. Since the inspector does not know the permutation used to train on the secret task, he or she cannot naively evaluate the model over a labeled dataset. The inspector is then tasked with finding a permuted model that results in the test loss being smaller than some acceptable threshold $L$, which we formulate into the following decision problem:

\texttt{EXISTS-PERM}: \textit{Given a neural network $h$, a labeled dataset $D = \{(\bx_1,y_1),\ldots,(\bx_n,y_n)\}$, a test loss $\ell$ and an acceptance threshold $L$, does there exist some  permutation $\pi$ such that the test loss $\frac{1}{n} \sum_{i=1}^n \ell(h_\pi(\bx_i), y_i) < L$?}

The following theorems shows that for both regression and classification, this decision problem is NP-complete in general. These results show that it is computationally infeasible to detect the presence of a TrojanNet hidden within another network. Proofs are provided in the appendix.

\begin{theorem}
\label{thm:regression}
The \texttt{EXISTS-PERM} decision problem with regression losses $\ell_\text{abs}(z, y) = |z - y|$ and $\ell_\text{square}(z, y) = (z - y)^2$ is NP-complete.
\end{theorem}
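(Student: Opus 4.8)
The plan is to show membership in NP and then NP-hardness by a reduction from a known NP-complete problem. Membership is the easy direction: a permutation $\pi$ serves as a certificate, since computing each $h_\pi(\bx_i)$ and evaluating the averaged loss against threshold $L$ is a polynomial-time forward pass through the network. The real work is the hardness reduction, and the natural source problem is \textsc{Partition} (or the closely related \textsc{Subset-Sum}): given positive integers $a_1,\ldots,a_m$ summing to $2S$, decide whether some subset sums to exactly $S$. Partition is attractive here because the ``shuffle the weights'' mechanic of a permutation is a combinatorial selection, and because the absolute-value and square losses both vanish exactly when a prediction equals its target — so achieving loss $0$ (take $L$ to be any small positive number, or $L=\varepsilon$) forces an \emph{exact} arithmetic identity, which is precisely what \textsc{Partition} asks for.

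\textbf{The construction.} Given an instance $a_1,\ldots,a_m$ of \textsc{Partition}, I would build a tiny fixed-architecture network $h$ — one linear layer suffices — whose single weight vector $\bw$ has $d = m$ (or a small constant multiple of $m$) entries, chosen so that $\bw$ is, as a multiset, essentially $\{a_1,\ldots,a_m\}$ together with whatever padding entries are needed to control the permutation's effect. The idea is to design a single data point $(\bx, y)$ so that $h_\pi(\bx) = \sum_{j \in T_\pi} a_j$ for the subset $T_\pi$ of coordinates that $\pi$ maps into the ``active'' positions of $\bx$ (e.g. $\bx$ is a 0/1 mask selecting $m/2$ of the $d$ input slots, so that the dot product $\langle \bw_\pi, \bx\rangle$ picks out exactly the $a_j$'s landing in those slots). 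Then set the target $y = S$. With loss $\ell_{\text{abs}}$ or $\ell_{\text{square}}$ and threshold $L$ equal to a small positive constant, the test loss is below $L$ iff $h_\pi(\bx) = S$ iff the selected coordinates form a valid partition half. One must be careful that \emph{every} subset of size $m/2$ is realizable by some permutation (true: $S_d$ acts transitively on size-$k$ subsets) and that no subset of a different size can sneak in — handled by making $\bx$ a strict 0/1 mask with exactly $m/2$ ones, or by using integrality/scaling arguments. A clean variant avoids even the size constraint by using \textsc{Subset-Sum} and padding the weight vector with zeros so any subset sum is reachable.

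\textbf{Main obstacle.} The delicate part is controlling the interaction between the permutation acting on the \emph{whole} parameter tensor and the constraint that the network still has a sensible, fixed architecture: a permutation is global over all $d$ coordinates of a layer, so I need the ``irrelevant'' coordinates (those not selected by $\bx$) to be genuinely inert, which the 0/1 mask guarantees, and I need to ensure the reduction is polynomial and that the reals/rationals involved have polynomial bit-length (immediate, since the $a_j$ are given in binary and only summation is used). A secondary subtlety is that the problem statement fixes the loss to be an \emph{average} over $n$ points and asks for strict inequality $<L$; with a single data point and $L$ a positive constant this is unproblematic, but if one wants $L$ tied to $0$ one should instead phrase it as: there exists $\pi$ with loss $<L$ iff there exists $\pi$ with loss $=0$ (for square/abs loss on integer-valued predictions and targets, any nonzero loss is at least a fixed positive gap, so pick $L$ below that gap). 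Once the gadget is pinned down, both directions of the reduction — partition $\Rightarrow$ permutation, and permutation $\Rightarrow$ partition — are immediate from the construction, so I expect essentially all the difficulty to be in choosing the gadget cleanly rather than in verifying it.
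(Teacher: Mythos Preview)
Your proposal is correct but follows a genuinely different route from the paper. The paper reduces from \textsc{1-in-3SAT}: it fixes a linear model $h(\bx)=\bw^\top\bx$ with weight vector $\bw=(1,\ldots,1,-1,\ldots,-1)$ containing $k$ ones and $n-k$ minus-ones, encodes each clause $C_i$ as a data point $\bx_i\in\{-1,0,1\}^n$ with target $y_i=-1$, and observes that $\bw_\pi^\top\bx_i=-1$ exactly when one literal of $C_i$ is satisfied under the truth assignment implied by $\pi$ (the positions mapped to $+1$ are the true variables). Since the correct number $k$ of true variables is not known a priori, the paper loops over all $k\in\{0,\ldots,n\}$, giving a disjunctive reduction rather than a single many-one map. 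Your \textsc{Subset-Sum}/\textsc{Partition} gadget is arguably cleaner: a single data point, a single many-one reduction, and the zero-padding trick makes every subset sum realisable by some permutation, so the integrality gap immediately fixes the threshold. The paper's approach, on the other hand, keeps all weights in $\{\pm1\}$ and uses several data points, which some might view as closer in spirit to a ``trained'' network evaluated on a test set; it also avoids the numerical magnitudes inherent in \textsc{Partition} instances (your weights carry the input integers $a_j$, whereas the paper's are unit-magnitude). Both approaches prove the same theorem; yours is the more direct of the two, while the paper's illustrates that hardness persists even when the weight multiset is as simple as $\pm1$.
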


\begin{theorem}
\label{thm:classification}
The \texttt{EXISTS-PERM} decision problem with classification losses $\ell_\text{binary}(z, y) = \mathbb{1}_{z \neq y}$ and $\ell_\text{logistic}(z, y) = 1/(1 + \exp(yz))$ is NP-complete.
\end{theorem}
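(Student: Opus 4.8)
The statement has the two standard halves, and the plan is to dispatch membership in NP quickly and spend the effort on NP-hardness.

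For membership in NP I would take the witness to be the tuple of per-layer permutations, written out explicitly (polynomial size), and verify it by running the $n$ forward passes of $h_\pi$, averaging the per-example losses, and comparing to $L$. For $\ell_\text{binary}$ this is exact rational arithmetic; for $\ell_\text{logistic}$ I would invoke the usual convention that the comparison is performed in a polynomial-precision model (or simply note that on the instances produced below the empirical loss is a fixed constant away from $L$, so finitely many bits suffice). Hence \texttt{EXISTS-PERM} $\in$ NP for both losses.

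For NP-hardness the plan is to reduce from \textsc{Partition}: given positive integers $c_1,\dots,c_m$ with $\sum_i c_i = 2T$, is there $S\subseteq[m]$ with $\sum_{i\in S}c_i = T$? The single idea driving the reduction is that permuting a \emph{zero-padded} weight vector realizes an arbitrary subset choice. I would take $h$ to be one fully-connected layer $h(\bx)=\bw^\top\bx + b$ with $\bw = (c_1,\dots,c_m,0,\dots,0)\in\mathbb{R}^{2m}$ ($m$ trailing zeros) and fixed bias $b=1$ — followed by a parameter-free $\operatorname{sign}$ at the output for the $\ell_\text{binary}$ variant; note $b$ is a scalar and so is untouched by any permutation. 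The dataset is the two points $(\bx^{(1)},+1),(\bx^{(2)},+1)$ with $\bx^{(1)} = (+1,\dots,+1,-1,\dots,-1)$ ($m$ of each) and $\bx^{(2)}=-\bx^{(1)}$, and $L=1/4$. For any $\pi$, letting $S$ be the subset of $\{c_1,\dots,c_m\}$ that $\pi$ routes into the first $m$ coordinates of $\bw_\pi$, one gets $s:=\bw_\pi^\top\bx^{(1)} = \sum_{j\in S}c_j-\sum_{j\notin S}c_j = 2\sum_{j\in S}c_j-2T$ (an even integer), $\bw_\pi^\top\bx^{(2)}=-s$, and as $\pi$ ranges over all permutations $S$ ranges over all subsets of $[m]$.

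Correctness then falls out. The two logits are $s+1$ and $1-s$. If \textsc{Partition} is feasible, choosing $\pi$ with $s=0$ makes both logits equal $1>0$, both points carry label $+1$, so the empirical $\ell_\text{binary}$ loss is $0<L$. If it is infeasible, then $s\neq 0$ hence $|s|\ge 2$ for every $\pi$, which forces exactly one of $s+1,1-s$ to be $\le -1<0$ against a $+1$ label, so every $\pi$ misclassifies a point and the loss is $\ge 1/2>L$. For $\ell_\text{logistic}$ I would rerun the same construction with every input and the bias scaled by $M$ ($M=2$ suffices): the logits become $M(s\pm1)$ with $|s\pm1|\ge 1$, so a correctly classified point has logistic loss $\le 1/(1+e^{M})<1/4$ and a misclassified point has logistic loss $>1/2$; replaying the dichotomy, the empirical $\ell_\text{logistic}$ loss is $<1/4=L$ in the yes-case and $>1/4$ in the no-case. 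Both reductions are plainly polynomial-time, giving NP-completeness for both losses. (The same zero-padded gadget, with a single data point and target $0$, also yields Theorem~\ref{thm:regression}.)

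The step I expect to demand the most care is the logistic loss. The obvious symmetric two-point gadget (antipodal inputs, no bias) is useless here: with $b=0$ the two per-example logistic losses sum to exactly $1$ for \emph{every} permutation, so the average is identically $1/2$ and separates nothing. The remedy is to break the symmetry with the fixed bias, which shifts the yes-case strictly off the decision boundary and opens a constant-width margin between yes- and no-instances — a margin robust to the rescaling needed to drive the logistic loss toward $\{0,1\}$. Everything else (the subset-$\leftrightarrow$-permutation correspondence, and the precision caveat for NP-membership) is routine.
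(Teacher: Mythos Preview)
Your argument is correct but follows a genuinely different route from the paper. The paper reduces from \textsc{Cyclic-Ordering}: it fixes the weight vector to $\bw=(1,\dots,n)$, so that the unknown permutation of $\bw$ \emph{is} the ordering sought in the source problem, and encodes each ordered triple $(a_i,b_i,c_i)$ as three data points whose signs test the pairwise inequalities $\pi(a_i)<\pi(b_i)$, etc.; exactly two of the three can hold, so the minimum per-triple binary loss is $1/3$, attained iff the cyclic constraint is met. Your reduction instead comes from \textsc{Partition} and exploits a different mechanism: zero-padding $\bw=(c_1,\dots,c_m,0,\dots,0)$ turns ``permute the weights'' into ``choose a subset of the $c_i$'', and two antipodal inputs with a fixed scalar bias separate the balanced case ($s=0$) from the unbalanced one ($|s|\ge 2$). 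Your construction is more elementary (a textbook source problem, only two data points, and the same gadget unifies Theorems~\ref{thm:regression} and~\ref{thm:classification}), at the cost of encoding the instance in the weights rather than the data and relying on a scalar bias being permutation-invariant; the paper's version keeps the weight vector instance-independent and bias-free but needs $3m$ data points and a less familiar NP-hard source. Both are valid polynomial reductions, and your handling of the logistic case (breaking the antipodal symmetry with the bias, then scaling by $M$ to push losses toward $\{0,1\}$) is the right fix for exactly the degeneracy you identify.
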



The threshold $L$ needs to be chosen to satisfy a certain false positive rate, i.e. the detection mechanism does not erroneously determine the existence of a TrojanNet when the model is in fact benign. The value of $L$ also affects the hardness of the \texttt{EXISTS-PERM} problem, where selecting a large $L$ can make the decision problem easy to solve at the cost of a high false positive rate. We investigate this aspect in Section \ref{sec:matching} and show that empirically, many secret tasks admit networks whose weights are learned on the public task alone but can be permuted to achieve a low test error on the secret task nonetheless. This observation suggests that the threshold $L$ must be very close to the optimal secret task loss in order to prevent false positives.

\subsection{Practical considerations}
\label{sec:practical}


\paragraph{Discontinuity of keys.} When using different keys, the sampled permutations should appear as independent random samples from $\Pi_d$ even when the keys are very similar. However, we cannot guarantee this property naively since pseudo-random permutation generators require random draws from the key space $\mathcal{K}$ to produce uniform random permutations. To solve this problem, we can apply a cryptographic hash function \citep{katz2014intro} such as SHA-256 to the key before its use in the pseudo-random permutation generator $H$. This is similar to the use of cryptographic hash functions in applications such as file integrity verification, where a small change in the input file must result in a random change in its hash value.


\paragraph{Using different permutations across layers.} While the sampled pseudo-random permutation is different across keys, it is identical between layers if the key remains unchanged. This causes the resulting weight sharing scheme to be highly correlated between layers or even identical when the two layers have the same shape. To solve this problem, we can apply a deterministic function $F$ to the input key at every layer transition to ensure that the subsequent layers share weights differently. Given an initial key $k$, the pseudo-random permutation generator at the $l$-th layer is keyed by $k^{(l)} = F^{(l-1)}(k)$, where $F^{(l)}$ denotes the $l$-fold recursive application of $F$ with $F^{(0)}$ being the identity function. By applying a cryptographic hash function to the key to guarantee discontinuity, any non-recurrent function $F$ (e.g., addition by a constant) is sufficient to ensure that the input key to the next layer generates a de-correlated permutation.



\paragraph{Batch normalization.} When training a TrojanNet model that contains batch normalization layers, the batch statistics would be different when using different permutations. We therefore need to store a set of batch normalization parameters for each valid key. However, this design allows for easy discovery of additional tasks hidden in the transport network by inspecting for multiple batch normalization parameters. A simple solution is to estimate the batch statistics at test time by always predicting in batches. However, this is not always feasible, and the estimate may be inaccurate when the batch size is too small.

Another option is to use non-parametric normalizers such as layer normalization \citep{ba2016layer} and group normalization \citep{wu2018group}. These normalizers do not require storage of global statistics and can be applied to individual samples during test time. It has been shown that these methods achieve similar performance as batch normalization \citep{wu2018group}. Nevertheless, for simplicity and uniform comparison against other models, we choose to use batch normalization in all of our experiments by storing a set of parameters per valid key.

\paragraph{Different output sizes.} When the secret and public tasks have different number of output nodes, we cannot simply permute the carrier network's final layer parameters to obtain a predictor for the secret task. However, when the number of outputs $C$ required for the secret task is fewer, we can treat the first $C$ output nodes of the carrier network as output nodes for the TrojanNet. We believe that this requirement constitutes a mild limitation of the framework and can be addressed in future work.

\section{Experiment}

\begin{table}[t!]
    \centering
    \resizebox{0.75\textwidth}{!}{%
    \begin{tabular}{ccccc}
    \textbf{Tasks} & \textbf{CIFAR10} & \textbf{CIFAR100} & \textbf{SVHN}  & \textbf{GTSRB} \\\hline
    Single  & 94.45$\pm$0.07 & 	75.14$\pm$0.45 & 	97.94$\pm$0.09 & 97.61$\pm$0.20 \\
    \hline
    (CIFAR10, CIFAR100) &  94.33$\pm$0.11 & 75.15$\pm$0.25 & - & - \\
    (CIFAR10, SVHN)  & 94.36$\pm$0.13 & - & 	97.96$\pm$0.06 & - \\
    (CIFAR10, GTSRB)  & 94.00$\pm$0.12 & - & - & 97.41$\pm$0.23 \\
    (CIFAR100, SVHN)  & - & 75.46$\pm$0.36 & 98.00$\pm$0.02 & - \\
    (CIFAR100, GTSRB)  & - & 75.22$\pm$0.30 & - & 97.25$\pm$0.44 \\
    (SVHN, GTSRB)  & - & - & 97.74$\pm$0.04 & 	97.33$\pm$0.30 \\
    \hline
    All  & 93.83 $\pm$ 0.16 & 74.89$\pm$0.30 & 97.73$\pm$0.03 & 97.52$\pm$0.21 \\
    \hline
    \end{tabular}%
    }
    \vspace{1ex}
    \caption{Test accuracy of RN50 trained on different tasks. Mean and standard deviation are computed over 5 individual runs. The top row corresponds to the model trained on the single respective task. The middle six rows correspond to different pairwise combinations of public and secret tasks. The last row shows test accuracies when training on all four tasks simultaneously with different permutations. Note that the values in each column are very similar, which indicates that training on multiple tasks has surprisingly little effect on the model's performance despite the parameter sharing.}
    \vspace{-1ex}
    \label{tab:pub_sec_task}
\end{table}

\begin{table}[t!]
    \centering
    \resizebox{0.8\textwidth}{!}{%
    \begin{tabular}{ccccc}
    \textbf{Tasks} & \textbf{SVHN (regression)} & \textbf{CIFAR10} & \textbf{CIFAR100} & \textbf{GTSRB} \\\hline
    Single & 95.82$\pm$0.16 & 94.45$\pm$0.07 & 	75.14$\pm$0.45 & 97.61$\pm$0.20 \\
    \hline
    (SVHN, CIFAR10) &  95.68$\pm$0.08 & 94.74$\pm$0.09 & - & - \\
    (SVHN, CIFAR100)  & 95.47$\pm$0.09 & - & 	76.39$\pm$0.3 & - \\
    (SVHN, GTSRB)  & 94.04$\pm$0.21 & - & - & 97.88$\pm$0.21 \\
    \hline
    \end{tabular}%
    }
    \vspace{1ex}
    \caption{Test accuracy of RN50 trained on different tasks combined with training a regression model for SVHN. Mean and standard deviation are computed over 5 individual runs. The top row corresponds to the model trained on the single respective task. The last three rows correspond to different combinations of public and secret tasks involving SVHN regression. Similar to the classification setting, training on multiple types of tasks also has little detrimental effect on accuracy.}
    \label{tab:regression}
    \vspace{-2ex}
\end{table}

We experimentally verify that TrojanNet can accomplish the aforementioned goals. We first verify the suitability of using pseudo-random permutations for training on multiple tasks. In addition, we test that the TrojanNet model is de-correlated from the carrier model and does not leak information to the shared parameters.

\subsection{Experiment settings}
\label{sec:exp_settings}

\paragraph{Datasets.} We experiment on several image classification datasets: CIFAR10, CIFAR100 \citep{krizhevsky2009learning}, Street View House Numbers (SVHN) \citep{netzer2011reading}, and German Traffic Sign Recognition Benchmark (GTSRB) \citep{stallkamp2011german}. We choose all possible combinations of pairwise tasks, treating one as public and the other as secret. To explore the limit of our technique, we also train a single TrojanNet against all four tasks simultaneous with four different keys.

\paragraph{Implementation details.} Our method is implemented in PyTorch, with source code released publicly on GitHub\footnote{https://github.com/wrh14/trojannet}. In all experiments we use ResNet50 (RN50) \citep{he2016deep} as the base model architecture. We refer to the TrojanNet variant as TrojanResNet50 (TRN50). We use the \texttt{torch.randperm()} function to generate the pseudo-random permutation and use \texttt{torch.manual\_seed()} to set the seed appropriately. For optimization, we use Adam \citep{kingma2014adam} with initial learning of $0.001$. A learning rate drop by a factor $0.1$ is applied after 50\% and 75\% of the scheduled training epochs. 
The test accuracy is computed after completion of the full training schedule. 


\subsection{Training on secret task}
\label{sec:secret_task}

Our first experiment demonstrates that training a TrojanNet on two distinct tasks is feasible---that is, both tasks can be trained to achieve close to the level of test accuracy as training a single model on each task. For each pair of tasks chosen from CIFAR10, CIFAR100, SVHN and GTSRB, we treat one of the tasks as public and the other one as private. 
Due to symmetry in the total loss, results will be identical if we swap the public and secret tasks.

\paragraph{Training and performance.} Table \ref{tab:pub_sec_task} shows the test accuracy of models trained on the four datasets: CIFAR10, CIFAR100, SVHN and GTSRB. Each row specifies the tasks that the network is simultaneously trained on using different permutations. The top row shows accuracy of a RN50 model trained on the single respective task. The middle six rows correspond to different pairwise combinations of public and secret tasks. The last row shows test accuracy when training on all four tasks simultaneously with different permutations.

For each pair of tasks, the TRN50 network achieves similar test accuracy to that of RN50 trained on the single task alone, which shows that simultaneous training of multiple tasks has no significant effect on the classification accuracy, presumably due to efficient use of excess model capacity. Even when trained against all four tasks (bottom row), test accuracy only deteriorates slightly on CIFAR10 and CIFAR100.
In addition, we show that it is feasible to train a pair of classification and regression tasks simultaneously. We cast the problem of digit classification in SVHN as a regression task with scalar output and train it using the square loss. Table \ref{tab:regression} shows test accuracy of training a TRN50 network for both SVHN regression and one of CIFAR10, CIFAR100 or GTSRB. Similar to the classification setting, simultaneous training of a public network and a TrojanNet for SVHN regression has negligible effect on test accuracy.

\paragraph{Using group normalization.} Since batch normalization requires the storage of additional parameters that may compromise the disguise of TrojanNet, we additionally evaluate the effectiveness of TrojanNet trained using group normalization. Table \ref{tab:pub_sec_task_gn} shows training accuracy for pairwise tasks when batch normalization layers in the RN50 model are replaced with group normalization. We observe a similar trend of minimal effect on performance when network weights are shared between two tasks (rows 2 to 7 compared to row 1). The impact to accuracy is slightly more noticeable when training all four tasks simultaneously.

\begin{table}[t!]
\begin{minipage}{.6\textwidth}
    \centering
    \resizebox{\textwidth}{!}{%
    \begin{tabular}{ccccc}
    \textbf{Tasks} & \textbf{CIFAR10} & \textbf{CIFAR100} & \textbf{SVHN}  & \textbf{GTSRB} \\\hline
    Single  & 93.35$\pm$0.22 & 	68.22$\pm$0.74 & 97.87$\pm$0.03 & 97.83$\pm$	0.12 \\
    \hline
    (CIFAR10, CIFAR100) & 92.84$\pm$0.54 & 	69.57$\pm$0.20 & - & - \\
    (CIFAR10, SVHN)  & 93.09$\pm$0.18 & - & 97.39$\pm$0.04 & - \\
    (CIFAR10, GTSRB)  & 92.48$\pm$0.18 & - & - & 97.55$\pm$0.17 \\
    (CIFAR100, SVHN)  & - & 68.83$\pm$0.34 & 97.45$\pm$0.05 & - \\
    (CIFAR100, GTSRB)  & - & 	68.82$\pm$1.15 & - & 97.54$\pm$0.40 \\
    (SVHN, GTSRB)  & - & - & 96.95$\pm$0.16 & 97.78$\pm$0.22 \\
    \hline
    All  & 90.04 $\pm$ 1.05 & 65.81$\pm$1.93 & 96.75$\pm$0.15 & 97.11$\pm$0.31 \\
    \hline
    \end{tabular}%
    }
    \vspace{1ex}
    \caption{Test accuracies of RN50 with group normalization trained on different tasks. Mean and standard deviation are computed over 5 individual runs. The drop in accuracy when training both a public and a secret task remains negligible, and the difference becomes noticeable only when training a single model for all tasks.}
    \label{tab:pub_sec_task_gn}
\end{minipage}
\hspace{2ex}
\begin{minipage}{.38\textwidth}
    \centering
    \resizebox{\textwidth}{!}{%
    \begin{tabular}{ccc}
        \textbf{Tasks (secret, public)} & \textbf{CIFAR10} & \textbf{SVHN} \\\hline
        Single  & 93.35$\pm$0.22 & 97.87$\pm$0.03 \\
        \hline
        (CIFAR10, CIFAR100) & 90.6 & - \\
        (CIFAR10, SVHN)  & 91.46 & - \\
        (CIFAR10, GTSRB)  & 89.51 & - \\
        (SVHN, CIFAR10)  & - & 95.36 \\
        (SVHN, CIFAR100)  & - & 93.02 \\
        (SVHN, GTSRB)  & - & 93.45 \\
        \hline
    \end{tabular}%
    }
    \vspace{1ex}
    \caption{Test accuracy of using the min-cost matching algorithm to permute a network trained on the public task to a network for the secret task. See text for details.}
    \label{tab:matching}
\end{minipage}
\vspace{-3ex}
\end{table}

\subsection{Selecting the threshold $L$}
\label{sec:matching}

In Section \ref{sec:security} we showed that determining the existence of a TrojanNet by evaluating the test loss and checking if it is lower than a threshold $L$ for some permuted model $h_\pi$ is NP-hard. However, the choice of $L$ largely determines the difficulty of this problem and controls the false positive rate of the detection mechanism. Conceptually, this property can be exploited for certain models so that approximately solving the \texttt{EXISTS-PERM} problem is sufficient for detecting TrojanNets.

We investigate this possibility by empirically determining an upper bound on $L$ to avoid too many false positives, \emph{i.e.}, mistakenly identifying a network as containing a secret network. More specifically, for a model $h$ trained on a certain public task and for any secret task with loss $L_\text{secret}$, we train a model $h_\text{secret}$ on the secret task and perform a min-cost matching \citep{goldberg1990mincost} between the parameters of $h$ and $h_\text{secret}$. Both the public and secret networks are trained using group normalization \citep{wu2018group} since batch normalization parameters encode information about the dataset that is difficult to mimic. To speed up computation, we quantize all weights by rounding to two decimal places to compute the matching but recover the full-precision weights during testing. Surprisingly, this simple technique can achieve a low test error on the secret task for \emph{any} pair of public and secret tasks that we evaluated.

Table \ref{tab:matching} shows test accuracy on CIFAR10 and SVHN when permuting a public network trained on various public task datasets and using min-cost matching to produce a network for the secret task. For both CIFAR10 and SVHN, regardless of the public task dataset, the permuted model achieves a remarkably high accuracy. For example, for the (CIFAR10,CIFAR100) pair of public/secret tasks, a single model trained solely on CIFAR10 achieves a test accuracy of $93.35\%$ whereas permuting a CIFAR100 model using min-cost matching achieves a test accuracy of $90.6\%$. Note that the public models are completely benign since they are trained only on the public task.

As a result, any threshold-based detector that determines the existence of a TrojanNet for CIFAR10 when the test accuracy is above $90\%$ (equivalently, when the test error is below $L = 10\%$) is prone to false positives. Comparing to Table \ref{tab:pub_sec_task_gn}, the pairwise training on (CIFAR10,CIFAR100) using a TrojanNet achieves a test error of $8.16\%$ -- only slightly below the test error achieved via min-cost matching. We believe that this phenomenon is widespread across different datasets and suggests that selecting a tight threshold $L$ may be very difficult and may require an intricate balance between computational efficiency and controlling the false positive rate.

\begin{figure*}[t!]
\centering
\includegraphics[width=\textwidth]{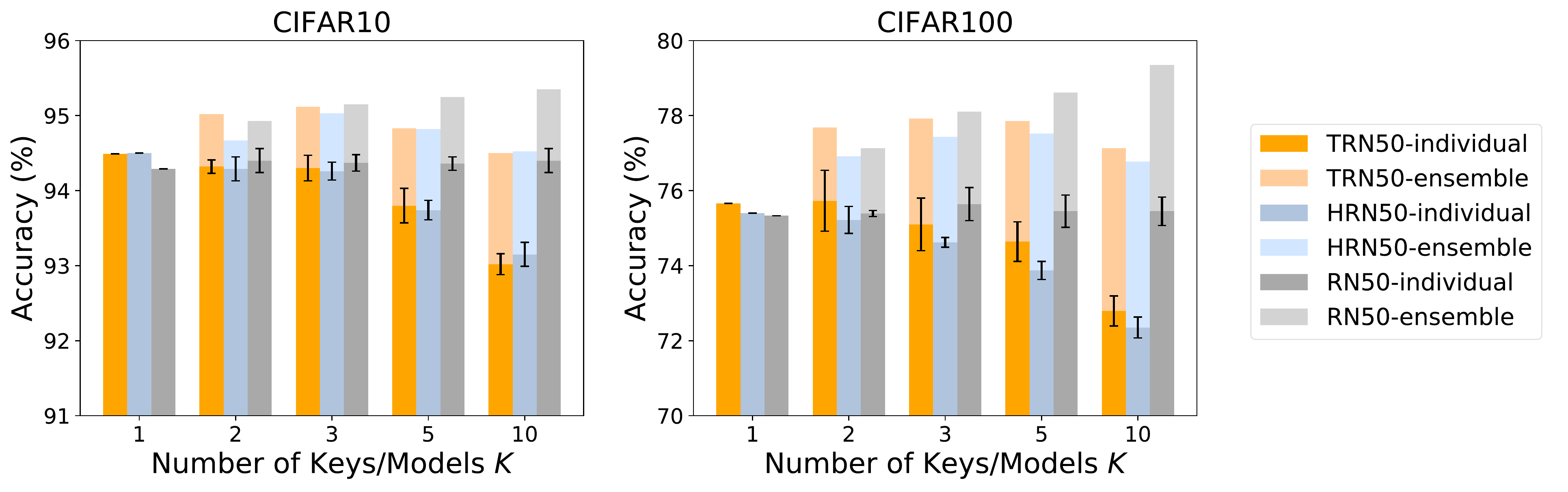}
\vspace{-4ex}
\caption{Test accuracy of TrojanResNet50 (TRN50), HashedResNet50 (HRN50) and ResNet50 (RN50) on CIFAR10 (left) and CIFAR100 (right). Individual models' accuracy is represented by the darker portion of each bar, and the ensemble accuracy is shown in the lighter shade. The error bars indicate standard deviation across different keys/models. Both the individual models' accuracy and that of the ensemble are close to identical between TRN50 and HRN50, which shows that the TRN50 networks are similar to independently trained ones with similar capacity despite sharing the underlying parameters. Individual model accuracy of TRN50 and RN50 are almost identical when $n$ is small, and the gap only enlarges when $n = 5,10$.}
\vspace{-1ex}
\label{fig:ensemble}
\end{figure*}

\subsection{Analysis}
\label{sec:ensemble}
We provide further analysis of the effect of weight sharing through pseudo-random permutation by training a network using multiple keys on the same task. We expect that the resulting TrojanNets (resulting from different keys) behave similar to independent networks of the same capacity trained on the same task. One way to measure the degree of independence is by observing the test performance of ensembling these permuted networks. Since ensemble methods benefit from the diversity of its component models \citep{krogh1994ensemble}, the boost in ensemble performance can be used as a proxy for measuring the degree of de-correlation between different permuted models.


\paragraph{Benchmarks.}
 We train TRN50 on CIFAR10/CIFAR100 with $n$ keys for different values of $K = 1,2,3,5,10$ and ensemble the resulting permuted networks for test-time prediction. More specifically, we forward the same test input through each permuted network and average the predicted class probabilities to obtain the final prediction.
 
 Our first benchmark to compare against is the ensemble of $K$ independently trained RN50 models, which serves as a theoretical upper bound for the performance of the TRN50 ensemble. In addition, we compare to HashedNet \citep{chen2015compressing}, a method of compressing neural networks, to show similarity in ensemble performance when the component networks have comparable capacity.
 
 
 
HashedNet applies a hash function to the model parameters to reduce it to a much fewer number of bins. Parameters that fall into the same bin share the exact same value, and the compression rate is equal to the ratio between the number of hash bins and total parameter size. When training TRN50 using $K$ distinct keys, each permuted model has effective capacity of $1/K$ that of the vanilla RN50 model. This capacity is identical to a compressed RN50 model using HashedNet with compression rate $1/K$. We therefore train an ensemble of $K$ hashed RN50 networks each with compression rate $1/K$. We refer to the resulting compressed HashedNet models as HashedResNet50 (HRN50).

\paragraph{Result comparison.} Figure \ref{fig:ensemble} shows the test accuracy of a TRN50 ensemble compared to that of RN50 and HRN50 ensembles. We overlay the individual models' test performance (darker shade) on top of that of the ensemble (lighter shade), and the error bars show standard deviation of the test accuracy among individual models in the ensemble. From this plot we can observe the following informative trends:

1. Individual TRN50 models (dark orange) have similar accuracy to that of HRN50 models (dark blue) on both datasets. This phenomenon can be observed across different values of $K$. Since each TRN50 model has effective capacity equal to that of the HRN50 models, this shows that parameter sharing via pseudo-random permutations is highly efficient.

2. Ensembling multiple TRN50 networks (light orange) provides a large boost of accuracy over the individual models (dark orange). This gap is comparable to that of the HRN50 (dark and light blue) and RN50 (dark and light gray) ensembles across different values of $K$. Since the effect of ensemble is largely determined by the degree of de-correlation between the component networks, this result shows that training of TrojanNets results in models that are as de-correlated as independent models.

3. The effect of ensembling TRN50 models is surprisingly strong. Without an increase in model parameters, the TRN50 ensemble (light orange) has comparable test accuracy to that of the RN50 ensemble (light gray) when $K$ is small. For $K=5,10$, the TRN50 ensemble lags in comparison to the RN50 ensemble due to lower model capacity of component networks. This result shows that TrojanNet may be a viable method of boosting test-time performance in memory-limited scenarios.

\begin{figure*}[t!]
\centering
\includegraphics[width=0.7\textwidth]{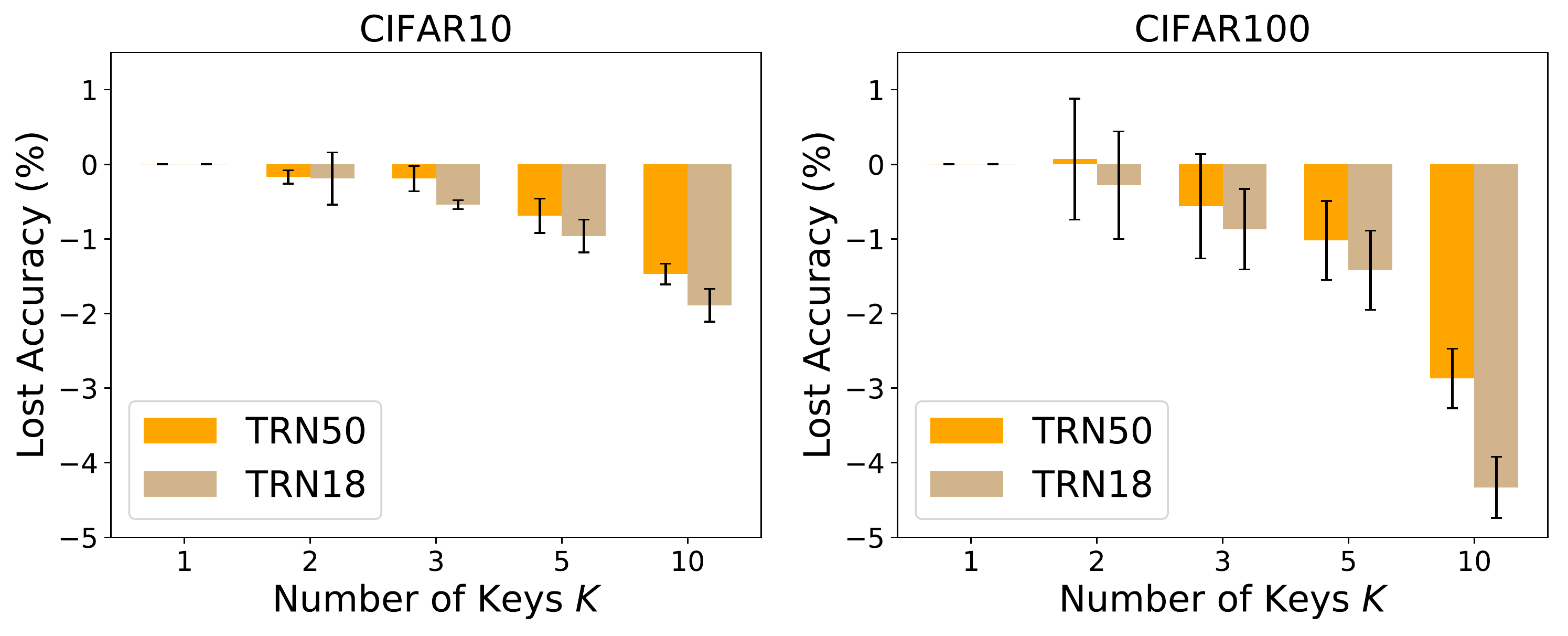}
\vspace{-1ex}
\caption{Decrease in test accuracy for TrojanNets when training with multiple keys on CIFAR10 (left) and CIFAR100 (right). Error shows standard deviation across different keys. The reduction is zero when only one key is used. The larger TrojanResNet50 (TRN50) model has consistently lower loss in accuracy than TrojanResNet18 (TRN18), which shows that having more excess capacity is beneficial for training TrojanNets with a large number of keys. Also note that the permuted TRN50 models achieve an average test accuracy close to that of training with one key.}
\label{fig:capacity}
\vspace{-1ex}
\end{figure*}

\paragraph{Effect of model capacity.}
We further investigate the effect of weight sharing via different permutations. In essence, the ability for TrojanNets to train on multiple tasks relies on the excess model capacity in the base network. It is intuitive to suspect that larger models can accommodate weight sharing with more tasks. To test this hypothesis, we train a TrojanResNet18 (TRN18) ensemble on CIFAR10 and CIFAR100 and measure the individual component models' accuracy in comparison to training the base network.
Figure \ref{fig:capacity} shows the loss in accuracy for the individual permuted models when training with various number of keys for both TRN50 and TRN18. The decrease in accuracy is consistently lower for TRN50 (orange bar) than for TRN18 (brown bar), which shows that larger models have more excess capacity to share among different permutations.

Another intriguing result is that TRN50 with as many as 10 different keys has relatively insignificant effect on the individual models' accuracy. The loss in accuracy is only $1.5\%$ on CIFAR10 and $2.9\%$ CIFAR100. This gap may be further reduced for larger models. This suggest that TrojanNets may be used in contexts apart from steganography, as the sharing of excess model capacity is exceptionally efficient and the resulting permuted models exhibit high degrees of independence.


\label{sec:experiment}


\section{Discussion and Conclusion}
We introduced TrojanNet, a novel steganographic technique for hiding neural networks within a carrier model. While there may be legitimate uses for this technique, adversaries can also leverage it for malicious intent such as industrial espionage. It logically follows that detection of TrojanNets is a topic of great importance. However, this appears to be a daunting task, as we show theoretically that the detection problem can be formulated as an NP-complete decision problem, and is therefore computationally infeasible in its general form. While strategies such as Markov Chain Monte Carlo have been used in similar contexts to efficiently reduce the search space \citep{diaconis2009markov}, the number of candidate permutations may be too large in our case. In fact, the number of permutations for a single convolutional layer of ResNet50 can be upwards of $(64 \times 64 \times 3 \times 3)! \approx 1.21 \times 10^{152336}$!



A more benign use of TrojanNet could be in watermarking neural networks for copyright protection. Training a neural network often requires significant data and resources, and a corporation may wish to protect against the leakage of their proprietary models via watermarking---a procedure that embeds a secret token into the network as proof of ownership \cite{uchida2017embedding, zhang2018protecting}. In this use case, knowledge of the secret key used to extract the TrojanNet could serve as the proof of ownership. We hope to explore such benevolent uses of TrojanNet in future work.


\label{sec:discussion}

\begin{ack}
We would like to thank Jerry Zhu and Sharon Yixuan Li for insightful discussions that greatly influenced this paper.

This research is supported by grants from the National
Science Foundation NSF (III-1618134, III- 1526012, IIS1149882, IIS-1724282, and TRIPODS-1740822, OAC1934714), the Bill and Melinda Gates Foundation, and the
Cornell Center for Materials Research with funding from the
NSF MRSEC program (DMR-1719875), and SAP America.
\end{ack}

\bibliography{main}

\begin{thebibliography}{30}
\providecommand{\natexlab}[1]{#1}
\providecommand{\url}[1]{\texttt{#1}}
\expandafter\ifx\csname urlstyle\endcsname\relax
  \providecommand{\doi}[1]{doi: #1}\else
  \providecommand{\doi}{doi: \begingroup \urlstyle{rm}\Url}\fi

\bibitem[tur()]{turing-nlg}
Turing-nlg: A 17-billion-parameter language model by microsoft.
\newblock URL
  \url{https://www.microsoft.com/en-us/research/blog/turing-nlg-a-17-billion-parameter-language-model-by-microsoft/}.

\bibitem[Ba et~al.(2016)Ba, Kiros, and Hinton]{ba2016layer}
Lei~Jimmy Ba, Ryan Kiros, and Geoffrey~E. Hinton.
\newblock Layer normalization.
\newblock \emph{CoRR}, abs/1607.06450, 2016.

\bibitem[Baluja(2017)]{baluja2017hiding}
Shumeet Baluja.
\newblock Hiding images in plain sight: Deep steganography.
\newblock In \emph{Proceedings of the 31st International Conference on Neural
  Information Processing Systems}, pages 2066--2076, 2017.

\bibitem[Bennett(2004)]{bennett2004linguistic}
Krista Bennett.
\newblock Linguistic steganography: Survey, analysis, and robustness concerns
  for hiding information in text.
\newblock 2004.

\bibitem[Brown et~al.(2020)Brown, Mann, Ryder, Subbiah, Kaplan, Dhariwal,
  Neelakantan, Shyam, Sastry, Askell, et~al.]{brown2020language}
Tom~B Brown, Benjamin Mann, Nick Ryder, Melanie Subbiah, Jared Kaplan, Prafulla
  Dhariwal, Arvind Neelakantan, Pranav Shyam, Girish Sastry, Amanda Askell,
  et~al.
\newblock Language models are few-shot learners.
\newblock \emph{arXiv preprint arXiv:2005.14165}, 2020.

\bibitem[Cheddad et~al.(2010)Cheddad, Condell, Curran, and
  Mc~Kevitt]{cheddad2010digital}
Abbas Cheddad, Joan Condell, Kevin Curran, and Paul Mc~Kevitt.
\newblock Digital image steganography: Survey and analysis of current methods.
\newblock \emph{Signal processing}, 90\penalty0 (3):\penalty0 727--752, 2010.

\bibitem[Chen et~al.(2015)Chen, Wilson, Tyree, Weinberger, and
  Chen]{chen2015compressing}
Wenlin Chen, James Wilson, Stephen Tyree, Kilian Weinberger, and Yixin Chen.
\newblock Compressing neural networks with the hashing trick.
\newblock In \emph{International Conference on Machine Learning}, pages
  2285--2294, 2015.

\bibitem[Dai and Cai(2019)]{dai2019towards}
Falcon~Z Dai and Zheng Cai.
\newblock Towards near-imperceptible steganographic text.
\newblock \emph{arXiv preprint arXiv:1907.06679}, 2019.

\bibitem[Diaconis(2009)]{diaconis2009markov}
Persi Diaconis.
\newblock The markov chain monte carlo revolution.
\newblock \emph{Bulletin of the American Mathematical Society}, 46\penalty0
  (2):\penalty0 179--205, 2009.

\bibitem[Djebbar et~al.(2012)Djebbar, Ayad, Meraim, and
  Hamam]{djebbar2012comparative}
Fatiha Djebbar, Beghdad Ayad, Karim~Abed Meraim, and Habib Hamam.
\newblock Comparative study of digital audio steganography techniques.
\newblock \emph{EURASIP Journal on Audio, Speech, and Music Processing},
  2012\penalty0 (1):\penalty0 1--16, 2012.

\bibitem[Goldberg and Tarjan(1990)]{goldberg1990mincost}
Andrew~V. Goldberg and Robert~E. Tarjan.
\newblock Finding minimum-cost circulations by successive approximation.
\newblock \emph{Math. Oper. Res.}, 15\penalty0 (3):\penalty0 430--466, August
  1990.
\newblock ISSN 0364-765X.
\newblock \doi{10.1287/moor.15.3.430}.
\newblock URL \url{http://dx.doi.org/10.1287/moor.15.3.430}.

\bibitem[Goodfellow et~al.(2014)Goodfellow, Shlens, and
  Szegedy]{goodfellow2014explaining}
Ian~J Goodfellow, Jonathon Shlens, and Christian Szegedy.
\newblock Explaining and harnessing adversarial examples.
\newblock \emph{arXiv preprint arXiv:1412.6572}, 2014.

\bibitem[Hamid et~al.(2012)Hamid, Yahya, Ahmad, and Al-Qershi]{hamid2012image}
Nagham Hamid, Abid Yahya, R~Badlishah Ahmad, and Osamah~M Al-Qershi.
\newblock Image steganography techniques: an overview.
\newblock \emph{International Journal of Computer Science and Security
  (IJCSS)}, 6\penalty0 (3):\penalty0 168--187, 2012.

\bibitem[He et~al.(2016)He, Zhang, Ren, and Sun]{he2016deep}
Kaiming He, Xiangyu Zhang, Shaoqing Ren, and Jian Sun.
\newblock Deep residual learning for image recognition.
\newblock In \emph{Proceedings of the IEEE conference on computer vision and
  pattern recognition}, pages 770--778, 2016.

\bibitem[Katz and Lindell(2014)]{katz2014intro}
Jonathan Katz and Yehuda Lindell.
\newblock \emph{Introduction to Modern Cryptography, Second Edition}.
\newblock {CRC} Press, 2014.
\newblock ISBN 9781466570269.

\bibitem[Kingma and Ba(2014)]{kingma2014adam}
Diederik~P. Kingma and Jimmy Ba.
\newblock Adam: {A} method for stochastic optimization.
\newblock \emph{CoRR}, abs/1412.6980, 2014.

\bibitem[Kreuk et~al.(2019)Kreuk, Adi, Raj, Singh, and Keshet]{kreuk2019hide}
Felix Kreuk, Yossi Adi, Bhiksha Raj, Rita Singh, and Joseph Keshet.
\newblock Hide and speak: Towards deep neural networks for speech
  steganography.
\newblock \emph{arXiv preprint arXiv:1902.03083}, 2019.

\bibitem[Krizhevsky and Hinton(2009)]{krizhevsky2009learning}
Alex Krizhevsky and Geoffrey Hinton.
\newblock Learning multiple layers of features from tiny images.
\newblock Technical report, Citeseer, 2009.

\bibitem[Krogh and Vedelsby(1994)]{krogh1994ensemble}
Anders Krogh and Jesper Vedelsby.
\newblock Neural network ensembles, cross validation, and active learning.
\newblock In \emph{Advances in Neural Information Processing Systems 7, {[NIPS}
  Conference, Denver, Colorado, USA, 1994]}, pages 231--238, 1994.

\bibitem[Kurak~Jr and McHugh(1992)]{kurak1992cautionary}
Charles~W Kurak~Jr and John McHugh.
\newblock A cautionary note on image downgrading.
\newblock In \emph{ACSAC}, pages 153--159. Citeseer, 1992.

\bibitem[Netzer et~al.(2011)Netzer, Wang, Coates, Bissacco, Wu, and
  Ng]{netzer2011reading}
Yuval Netzer, Tao Wang, Adam Coates, Alessandro Bissacco, Bo~Wu, and Andrew~Y
  Ng.
\newblock Reading digits in natural images with unsupervised feature learning.
\newblock In \emph{NIPS workshop on deep learning and unsupervised feature
  learning}, volume 2011, page~5, 2011.

\bibitem[Sharma et~al.(2019)Sharma, Aggarwal, Singhania, Gupta, and
  Khanna]{sharma2019hiding}
Kartik Sharma, Ashutosh Aggarwal, Tanay Singhania, Deepak Gupta, and Ashish
  Khanna.
\newblock Hiding data in images using cryptography and deep neural network.
\newblock \emph{arXiv preprint arXiv:1912.10413}, 2019.

\bibitem[Stallkamp et~al.(2011)Stallkamp, Schlipsing, Salmen, and
  Igel]{stallkamp2011german}
Johannes Stallkamp, Marc Schlipsing, Jan Salmen, and Christian Igel.
\newblock The {G}erman {T}raffic {S}ign {R}ecognition {B}enchmark: A
  multi-class classification competition.
\newblock In \emph{IEEE International Joint Conference on Neural Networks},
  pages 1453--1460, 2011.

\bibitem[Szegedy et~al.(2014)Szegedy, Zaremba, Sutskever, Bruna, Erhan,
  Goodfellow, and Fergus]{szegedy2014intriguing}
Christian Szegedy, Wojciech Zaremba, Ilya Sutskever, Joan Bruna, Dumitru Erhan,
  Ian Goodfellow, and Rob Fergus.
\newblock Intriguing properties of neural networks.
\newblock In \emph{In Proc. ICLR}, 2014.

\bibitem[Uchida et~al.(2017)Uchida, Nagai, Sakazawa, and
  Satoh]{uchida2017embedding}
Yusuke Uchida, Yuki Nagai, Shigeyuki Sakazawa, and Shin'ichi Satoh.
\newblock Embedding watermarks into deep neural networks.
\newblock In \emph{Proceedings of the 2017 ACM on International Conference on
  Multimedia Retrieval}, pages 269--277, 2017.

\bibitem[Wu and He(2018)]{wu2018group}
Yuxin Wu and Kaiming He.
\newblock Group normalization.
\newblock In \emph{Computer Vision - {ECCV} 2018 - 15th European Conference,
  Munich, Germany, September 8-14, 2018, Proceedings, Part {XIII}}, pages
  3--19, 2018.

\bibitem[Yang et~al.(2018)Yang, Guo, Chen, Huang, and Zhang]{yang2018rnn}
Zhong-Liang Yang, Xiao-Qing Guo, Zi-Ming Chen, Yong-Feng Huang, and Yu-Jin
  Zhang.
\newblock Rnn-stega: Linguistic steganography based on recurrent neural
  networks.
\newblock \emph{IEEE Transactions on Information Forensics and Security},
  14\penalty0 (5):\penalty0 1280--1295, 2018.

\bibitem[Zhang et~al.(2018)Zhang, Gu, Jang, Wu, Stoecklin, Huang, and
  Molloy]{zhang2018protecting}
Jialong Zhang, Zhongshu Gu, Jiyong Jang, Hui Wu, Marc~Ph. Stoecklin, Heqing
  Huang, and Ian Molloy.
\newblock Protecting intellectual property of deep neural networks with
  watermarking.
\newblock In \emph{Proceedings of the 2018 on Asia Conference on Computer and
  Communications Security, AsiaCCS 2018, Incheon, Republic of Korea, June
  04-08, 2018}, pages 159--172, 2018.
\newblock \doi{10.1145/3196494.3196550}.
\newblock URL \url{https://doi.org/10.1145/3196494.3196550}.

\bibitem[Zhu et~al.(2018)Zhu, Kaplan, Johnson, and Fei-Fei]{zhu2018hidden}
Jiren Zhu, Russell Kaplan, Justin Johnson, and Li~Fei-Fei.
\newblock Hidden: Hiding data with deep networks.
\newblock In \emph{Proceedings of the European conference on computer vision
  (ECCV)}, pages 657--672, 2018.

\bibitem[Ziegler et~al.(2019)Ziegler, Deng, and Rush]{ziegler2019neural}
Zachary~M Ziegler, Yuntian Deng, and Alexander~M Rush.
\newblock Neural linguistic steganography.
\newblock \emph{arXiv preprint arXiv:1909.01496}, 2019.

\end{thebibliography}
\bibliographystyle{plainnat}


\newpage
\appendix

\section{Appendix}
\setcounter{theorem}{0}

\subsection{NP-completeness proofs}

In this section, we prove that the \texttt{EXISTS-PERM} decision problem is NP-complete. The fact that \texttt{EXISTS-PERM} is in NP is trivial since given a key, it is straightforward to evaluate the model and check if the loss is sufficiently small.

\begin{theorem}
The \texttt{EXISTS-PERM} decision problem with regression losses $\ell_\text{abs}(z, y) = |z - y|$ and $\ell_\text{square}(z, y) = (z - y)^2$ is NP-complete.
\end{theorem}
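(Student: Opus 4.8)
The plan is to confirm that \texttt{EXISTS-PERM} lies in NP (essentially immediate) and then to establish NP-hardness by a polynomial-time reduction from \textsc{Subset Sum}, which is NP-complete. Membership in NP needs no work: a permutation $\pi$ is a polynomial-size certificate, and given it one forms $h_\pi$, evaluates it on the $n$ points of $D$, and compares the average loss against $L$ in polynomial time (as the excerpt already notes).

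For hardness, given a \textsc{Subset Sum} instance with positive integers $a_1,\dots,a_m$ and target $t$, I would construct the following \texttt{EXISTS-PERM} instance. Let $h$ be a single fully connected linear layer $\mathbb{R}^{2m}\to\mathbb{R}$ — a legitimate, if degenerate, neural network; a ReLU can be appended after an appropriate shift if a nonlinearity is insisted upon — whose weight vector is $\bw=(a_1,\dots,a_m,0,\dots,0)$ padded by $m$ trailing zeros. Let $D=\{(\bx,t)\}$ consist of the single example with $\bx=(1,\dots,1,0,\dots,0)$ having $m$ leading ones and $m$ trailing zeros, let $\ell$ be either $\ell_{\text{abs}}$ or $\ell_{\text{square}}$, and set the acceptance threshold $L=1$. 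This is clearly computable in time polynomial in the size of the \textsc{Subset Sum} instance.

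The correctness argument then runs as follows. For any permutation $\pi$ we have $h_\pi(\bx)=\sum_{j=1}^{m} w_{\pi(j)}$, the sum of whichever $m$ weights are moved into the first $m$ coordinates; as $\pi$ ranges over all permutations, this sum ranges over exactly $\{\sum_{i\in S}a_i : S\subseteq[m]\}$ (route the $a_i$ with $i\in S$ into the first $|S|$ coordinates and zeros into the remaining $m-|S|$, which is always a valid bijection). Every such value, and $t$, is an integer, so for either choice of $\ell$ the quantity $\ell(h_\pi(\bx),t)$ is a nonnegative integer, hence is strictly below $L=1$ if and only if it equals $0$, i.e. if and only if $h_\pi(\bx)=t$. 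Therefore a permutation with average loss $<L$ exists if and only if the \textsc{Subset Sum} instance is a yes-instance, which — together with membership — gives NP-completeness, simultaneously for $\ell_{\text{abs}}$ and $\ell_{\text{square}}$.

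I expect the only points needing care to be: (i) the strict inequality ``$<L$'' in the problem statement, handled by the integrality of all achievable network outputs together with the choice $L=1$, which also lets one construction serve both regression losses verbatim; and (ii) ensuring the gadget is a genuine ``neural network'' rather than a contrived object — the single linear layer does the job, and the zero-padding is precisely the device that converts ``which coordinates $\pi$ fills with which weights'' into ``which subset of the $a_i$ is selected.'' A variant avoiding zero weights can instead reduce from \textsc{Partition} and use two data points to fix the admissible subset size, but the padded single-datapoint version is the cleanest.
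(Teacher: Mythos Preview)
Your argument is correct. Membership in NP is immediate, and your reduction from \textsc{Subset Sum} is sound: padding the weight vector with $m$ zeros means that the first $m$ coordinates of $\bw_\pi$ pick out an arbitrary subset of the $a_i$'s, and integrality of all achievable outputs makes the strict threshold $L=1$ work for both $\ell_{\mathrm{abs}}$ and $\ell_{\mathrm{square}}$ simultaneously.

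The paper takes a different route, reducing instead from \texttt{1-IN-3SAT}. There the weight vector is $(\underbrace{1,\ldots,1}_{k},\underbrace{-1,\ldots,-1}_{n-k})$, so a permutation encodes a truth assignment with exactly $k$ variables set true; each clause becomes a data point $\bx_i\in\{-1,0,1\}^n$ with target $y_i=-1$, and one checks that $\bw_\pi^\top\bx_i=-1$ precisely when the clause has exactly one true literal. Because $k$ is not known in advance, the paper loops over all $k\in\{0,\ldots,n\}$, which makes the argument a disjunctive truth-table (Turing-style) reduction rather than a single many-one reduction. Your construction is in that sense tighter: it is a genuine Karp reduction, uses a single data point, and avoids the outer loop entirely. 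The trade-off is that your gadget relies on large numerical weights (inherited from the \textsc{Subset Sum} instance), whereas the paper's weights are all $\pm 1$; if one cared about hardness for networks with bounded-magnitude parameters, the paper's approach would be the more informative one. Both reductions use a bare linear model as the network $h$, so your remark about the degeneracy of the architecture applies equally to the original proof.
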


\begin{proof}
To show NP-hardness, we will reduce from the following NP-complete problem.\\

\noindent
\texttt{1-IN-3SAT}: \textit{Given a set of binary variables $v_1,\ldots,v_n \in {0,1}$ and a set of logical clauses $C = \{C_1=(l_{1,1} \vee l_{1,2} \vee l_{1,3}),\ldots,C_m=(l_{m,1} \vee l_{m,2} \vee l_{m,3}\}$, does there exist an assignment of the $x_i$'s such that each clause has exactly one literal that evaluates to true?}\\

Let $C$ be an instance of the \texttt{1-IN-3SAT} problem. We may assume WLOG that no clause in $C$ contains a variable and its negation. Let $k \in \{0,1,\ldots,n\}$ and consider a linear regression model $h(\bx) = \bw^\top \bx$ with
$$\bw = (\underbrace{1,\ldots,1}_\text{k},\underbrace{-1,\ldots,-1}_\text{n-k}).$$
For each $C_i$, define $\bx_i \in \mathbb{R}^n$ so that
$$(\bx_i)_j = \begin{cases}
1 & \text{if } l_{i,p} = v_j \text{ for some } p, \\
-1 & \text{if } l_{i,p} = \neg v_j \text{ for some } p, \\
0 & \text{otherwise}
\end{cases}
.$$
and let $D = \{(\bx_1, y_1=-1), (\bx_2, y_2=-1),\ldots,(\bx_m,y_m=-1)\}$. We will show that \texttt{1-IN-3SAT} admits a solution $\bv = (v_1,\ldots,v_n) \in \{0,1\}^n$ with exactly $k$ non-zero values if and only if $\frac{1}{m} \sum_{i=1}^m \ell_\text{abs}(\sigma(\bw^\top \bx_i), y_i) < \frac{2}{m}$. This gives a polynomial-time reduction by testing for every $k \in \{0,1,\ldots,n\}$. The proof is identical for the square loss $\ell_\text{square}$.

Observe that for every $i$, the value $\bw^\top \bx_i$ is an integer whose value is $-1$ only when exactly one of the literals in $C_i$ is satisfied. If either none of or if more than one of the literals in $C_i$ is satisfied then $\bw^\top \bx_i \in \{-3,1,3\}$. Thus $\ell_\text{abs}(\bw^\top \bx_i, y_i) = 0$ if and only if the clause $C_i$ contains exactly one true literal. Summing over all the clauses gives that $\frac{1}{m} \sum_{i=1}^m \ell_\text{abs}(\bw^\top \bx_i, y_i) = 0$ if and only if all the clauses are satisfied. Since the values of $\bw^\top \bx_i \in \{-3,-1,1,3\}$, at least one of the clauses $C_i$ failing to admit exactly one true literal is equivalent to the test loss $\frac{1}{m} \sum_{i=1}^m \ell_\text{abs}(\bw^\top \bx_i, y_i) \geq \frac{2}{m}$. This completes the reduction by setting $L=\frac{2}{m}$.

\end{proof}

\begin{theorem}
The \texttt{EXISTS-PERM} decision problem with classification losses $\ell_\text{binary}(z, y) = \mathbb{1}_{z \neq y}$ and $\ell_\text{logistic}(z, y) = 1/(1 + \exp(yz))$ is NP-complete.
\end{theorem}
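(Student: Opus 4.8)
The plan is to reuse, almost verbatim, the backbone of the regression reduction from \texttt{1-IN-3SAT}: a one-layer linear model $h(\bx)=\bw^\top\bx$, as in the regression proof, whose weight vector $\bw$ is a string of $\pm 1$'s, so that a permutation $\pi$ of the entries of $\bw$ encodes a truth assignment, and the clause gadgets $\bx_i\in\{-1,0,1\}^n$ satisfy $\bw_\pi^\top\bx_i = 2t_i-3\in\{-3,-1,1,3\}$, where $t_i$ is the number of literals of $C_i$ made true by the assignment; in particular $\bw_\pi^\top\bx_i=-1$ iff $t_i=1$. Membership in NP is trivial (a key is a polynomial certificate), so only NP-hardness is at issue, and I only need to pick the right data set and threshold $L$ for each loss. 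As in the regression proof one can range over the number $k$ of $+1$'s in $\bw$, or, more cleanly, pad $\bw$ with $n$ extra $+1$'s, $n$ extra $-1$'s and $n$ always-zero features per point so that permutations realize every assignment in $\{0,1\}^n$ at once.

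For $\ell_\text{binary}(z,y)=\mathbb{1}_{z\neq y}$ the change is immediate: keep the regression points $(\bx_i,y_i=-1)$ and let the real-valued network output be $z$ (no sign activation --- with one, $\mathbb{1}_{\mathrm{sign}(s_i)\neq -1}$ would collapse $t_i=0$ into $t_i=1$). Then $\ell_\text{binary}(\bw_\pi^\top\bx_i,-1)=\mathbb{1}_{t_i\neq 1}$, so the average test loss is the fraction of clauses that fail to have exactly one true literal, and this is below $L=1/m$ exactly when the encoded assignment solves the \texttt{1-IN-3SAT} instance. This settles the binary-loss case.

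The logistic loss $\ell_\text{logistic}(z,y)=1/(1+\exp(yz))$ is where a new idea is needed. With $y=-1$ it equals $1/(1+e^{-z})$, which is strictly increasing in $z$, so it rewards driving $z$ as negative as possible --- over-satisfying a clause ($t_i=3$, giving $z=-3$) rather than hitting $z=-1$; with one example per clause a ``no'' instance can undercut a ``yes'' instance and no threshold separates them. To force a minimum at $s_i:=\bw_\pi^\top\bx_i=-1$ I would attach a second, reflected example to each clause, using the pair $(2\bx_i,-1)$ and $(-\bx_i,-1)$; their two logistic losses add to $g(s_i):=\frac{1}{1+e^{-2s_i}}+\frac{1}{1+e^{s_i}}$, and a one-line check gives $g(-1)<g(-3)<g(3)<g(1)$ on the only reachable values $s_i\in\{-3,-1,1,3\}$. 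Hence $g$ is uniquely minimized at $s_i=-1$, with a universal gap $\delta:=g(-3)-g(-1)>0$. A ``yes'' instance makes every $s_i=-1$, so the average loss over the $2m$ points equals exactly $g(-1)/2$; any ``no'' instance has some $s_i\neq-1$, hence average loss at least $g(-1)/2+\delta/(2m)$. Taking any rational $L$ with $g(-1)/2<L<g(-1)/2+\delta/(2m)$ (which has $O(\log m)$ bits) completes the reduction.

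The single real obstacle is this monotonicity of the logistic loss; everything else is bookkeeping. The points I expect to need care are: verifying the chain $g(-1)<g(-3)<g(3)<g(1)$ --- and noting that $-1$ need only minimize $g$ over the discrete set $\{-3,-1,1,3\}$, not over $\mathbb{R}$ --- and checking that the added points $(2\bx_i,-1)$ introduce no spurious low-loss permutation, which they cannot since their loss is a function of the same quantity $s_i=2t_i-3$ that the original point sees.
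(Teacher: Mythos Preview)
Your proposal is correct but follows a genuinely different route from the paper. The paper does \emph{not} reuse the \texttt{1-IN-3SAT} backbone; instead it reduces from \texttt{CYCLIC-ORDERING}, taking $\bw=(1,2,\ldots,n)$ so that the permutation itself is the combinatorial object being sought. Each ordered triple $(a_i,b_i,c_i)$ produces three data points encoding the comparisons $\pi(a_i)<\pi(b_i)$, $\pi(b_i)<\pi(c_i)$, $\pi(c_i)<\pi(a_i)$, of which exactly two can hold; with the sign-thresholded classifier this gives per-triple binary loss $1/3$ iff the cyclic constraint is met, and threshold $L=(m+1)/(3m)$ separates yes from no. For the logistic case the paper simply rescales the data by a constant $z$ so that each point's loss becomes $\epsilon$ or $1-\epsilon$, collapsing the analysis back to the binary case. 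Your approach buys conceptual economy---one source problem and one gadget family for both theorems---at the price of two slightly unnatural moves: reading $\ell_\text{binary}$ as $\mathbb{1}_{z\neq y}$ on the \emph{raw} linear output (formally legitimate since you construct $h$, but not the usual classification-error semantics the paper adopts), and the two-point $(2\bx_i,-\bx_i)$ gadget to coerce a discrete minimum out of a monotone loss. The paper's route buys a cleaner classification semantics and a simpler logistic argument (pure scaling, no gadget), at the cost of introducing a second NP-hard source problem. Your numerical claim $g(-1)<g(-3)<g(3)<g(1)$ checks out, and your observation that the minimum need only hold on $\{-3,-1,1,3\}$ is exactly right; the padding idea to avoid looping over $k$ is standard, though your description of it (``$n$ extra $+1$'s, $n$ extra $-1$'s and $n$ always-zero features'') is slightly garbled---you want $2n$ zero-padded coordinates in each $\bx_i$, not $n$.
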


\begin{proof}
We will prove NP-hardness for a linear network $h$ for binary classification (i.e., logistic regression model). Our reduction will utilize the following NP-complete problem.\\

\noindent
\texttt{CYCLIC-ORDERING}: \textit{Given $n \in \mathbb{N}$ and a collection $C = \{(a_1,b_1,c_1),\ldots,(a_m,b_m,c_m)\}$ of ordered triples, does there exist a permutation $\pi : \{1,\ldots,n\} \rightarrow \{1,\ldots,n\}$ such that for every $i=1,\ldots,n$, we have either one of the following three orderings:
\begin{enumerate}[(I)]
    \setlength\itemsep{1pt}
    \item $\pi(a_i) < \pi(b_i) < \pi(c_i)$,
    \item $\pi(b_i) < \pi(c_i) < \pi(a_i)$, or
    \item $\pi(c_i) < \pi(a_i) < \pi(b_i)$.
\end{enumerate}}

We first show that the \texttt{EXISTS-PERM} problem with binary classification loss $\ell_\text{binary}$ is NP-hard. Given an instance $C = \{(a_1,b_1,c_1),\ldots,(a_m,b_m,c_m)\}$ of the \texttt{CYCLIC-ORDERING} problem, let $\bw = (1,\ldots,n)$ be the shared weights vector and let $\pi \in \Pi_n$ be a permutation. Let $\bw_\pi = (\bw_{\pi(1)},\ldots,\bw_{\pi(n)})$ be the weight vector after permuting by $\pi$. Denote by $h_\pi$ the model obtained from $\bw_\pi$. For $i = 1,\ldots,m$ and $j = 1,2,3$, let $\bx_{i,j}$ be the all-zero vector except
\begin{enumerate}[(i)]
    \setlength\itemsep{1pt}
    \item $(\bx_{i,j})_{a_i} = -1$ and $(\bx_{i,j})_{b_i} = 1$ if $j = 1$,
    \item $(\bx_{i,j})_{b_i} = -1$ and $(\bx_{i,j})_{c_i} = 1$ if $j = 2$,
    \item $(\bx_{i,j})_{c_i} = -1$ and $(\bx_{i,j})_{a_i} = 1$ if $j = 3$.
\end{enumerate}

Let $D = \{(\bx_{i,j}, y_{i,j}=1)\}_{i=1,\ldots,m, j=1,2,3}$ and let $L = \frac{m+1}{3m}$. For any permutation $\pi \in \Pi_n$, since $h_\pi$ is a binary logistic regression model, we have that $h_\pi(\bx_{i,j}) = 1$ if and only if $\bw_\pi^\top \bx_{i,j} > 0$. By construction, we have that for $i=1,\ldots,m$,
\begin{align*}
    \ell_\text{binary}(h_\pi(\bx_{i,1}),y_{i,1}) = 0
    &\Leftrightarrow \bw_\pi^\top \bx_{i,1} > 0 \\
    &\Leftrightarrow (\bw_\pi)_{b_i} - (\bw_\pi)_{a_i} > 0 \\
    &\Leftrightarrow \pi(a_i) < \pi(b_i).
\end{align*}
Similarly, $$\ell_\text{binary}(h_\pi(\bx_{i,2}),y_{i,2}) = 0 \Leftrightarrow \pi(b_i) < \pi(c_i),$$ $$\ell_\text{binary}(h_\pi(\bx_{i,3}),y_{i,3}) = 0 \Leftrightarrow \pi(c_i) < \pi(a_i).$$ However, since at most one of conditions (I)-(III) can be satisfied, we have that at least one of $\pi(a_i) < \pi(b_i)$, $\pi(b_i) < \pi(c_i)$ or $\pi(c_i) < \pi(a_i)$ does not hold. Thus $$\frac{1}{3} \sum_{j=1}^3 \ell_\text{binary}(h_\pi(\bx_{i,j}),y_{i,j}) \geq \frac{1}{3}$$ for all $i$. Furthermore, if $\frac{1}{3} \sum_{j=1}^3 \ell_\text{binary}(h_\pi(\bx_{i,j}),y_{i,j}) = \frac{1}{3}$ then one of (I)-(III) is satisfied. This shows that the cyclic ordering defined by the ordered triple $(a_i,b_i,c_i)$ is satisfied if and only if $\frac{1}{3} \sum_{j=1}^3 \ell_\text{binary}(h_\pi(\bx_{i,j}),y_{i,j}) = \frac{1}{3}$. Summing over all $i$ gives that the test loss $$\frac{1}{3m} \sum_{i=1}^m \sum_{j=1}^3 \ell_\text{binary}(h_\pi(\bx_{i,j}),y_{i,j}) = \frac{1}{3}$$ if and only if one of conditions (I)-(III) is satisfied for every $i$. This shows that the \texttt{CYCLIC-ORDERING} problem instance can be satisfied if and only if $\frac{1}{3m} \sum_{i=1}^m \sum_{j=1}^3 \ell_\text{binary}(h_\pi(\bx_{i,j}),y_{i,j}) < \frac{m+1}{3m} = L$. This completes the reduction for $\ell_\text{binary}$.

For $\ell_\text{logistic}$, fix $\epsilon \in (0, \frac{1}{m})$ and choose $z \geq 0$ so that $\ell_\text{logistic}(z) = \epsilon$. Recall that the logistic loss is strictly decreasing, anti-symmetric around 0, and bijective between $\mathbb{R}$ and $(0,1)$. Define $\bx_{i,j}$ to be the all-zero vector except
\begin{enumerate}[(i)]
    \setlength\itemsep{1pt}
    \item $(\bx_{i,j})_{a_i} = -z$ and $(\bx_{i,j})_{b_i} = z$ if $j = 1$,
    \item $(\bx_{i,j})_{b_i} = -z$ and $(\bx_{i,j})_{c_i} = z$ if $j = 2$,
    \item $(\bx_{i,j})_{c_i} = -z$ and $(\bx_{i,j})_{a_i} = z$ if $j = 3$.
\end{enumerate}
Following a similar argument, we have that for every $i = 1,\ldots,m$:
$$\ell_\text{logistic}(h_\pi(\bx_{i,1}),y_{i,1}) = \begin{cases}
\epsilon & \text{if } \pi(a_i) < \pi(b_i), \\
1 - \epsilon & \text{otherwise},
\end{cases}$$
and similarly for $\ell_\text{logistic}(h_\pi(\bx_{i,2}),y_{i,2})$ and $\ell_\text{logistic}(h_\pi(\bx_{i,3}),y_{i,3})$. Hence $$\frac{1}{3} \sum_{j=1}^3 \ell_\text{logistic}(h_\pi(\bx_{i,j}),y_{i,j}) = \begin{cases}
\frac{1+\epsilon}{3} & \text{if one of (I)-(III) is satisfied}, \\
\frac{2-\epsilon}{3} & \text{otherwise}.
\end{cases}$$ Summing over all $i$ gives that $$\frac{1}{3m} \sum_{i=1}^m \sum_{j=1}^3 \ell_\text{logistic}(h_\pi(\bx_{i,j}),y_{i,j}) = \frac{1+\epsilon}{3} < \frac{m+1}{3m}$$ if the \texttt{CYCLIC-ORDERING} problem is satisfied, and $$\frac{1}{3m} \sum_{i=1}^m \sum_{j=1}^3 \ell_\text{logistic}(h_\pi(\bx_{i,j}),y_{i,j}) \geq \frac{m-1}{m} \left( \frac{1+\epsilon}{3} \right) + \frac{1}{m} \left( \frac{2-\epsilon}{3} \right) \geq \frac{m+1}{3m}$$ if at least one triple in $C$ is violated. This completes the reduction by setting $L = \frac{m+1}{3m}$.
\end{proof}

\end{document}